\newtheorem{theorem}{Theorem}[section]
\newcommand{\CASE}[1]{\STATE \textbf{case} #1\textbf{:} \begin{ALC@g}}
\newcommand{\ENDCASE}{\end{ALC@g}}
\newcommand{\DEFAULT}{\STATE \textbf{default:} \begin{ALC@g}}
\newcommand{\ENDDEFAULT}{\end{ALC@g}}
\newcommand{\DEFAULTLINE}[1]{\STATE \textbf{default:} }
\title{\LARGE \bf
Safe Certificate-Based Maneuvers for Teams of Quadrotors Using Differential Flatness *
}
\author{Li Wang, Aaron D. Ames, and Magnus Egerstedt$^\dagger$
\thanks{*The work by the first and third authors was sponsored by Grant No.
1544332 from the U.S. National Science Foundation, and the work
of the second author was sponsored by Grant No. 1239055 from the U.S. National Science Foundation.}
\thanks{$^\dagger$Li Wang and Magnus Egerstedt are with the School of Electrical and Computer Engineering, Aaron D. Ames is with the School of Mechanical Engineering and the School of Electrical and Computer Engineering, Georgia Institute of Technology, Atlanta, GA 30332, USA. Email: {\tt\small \{liwang, magnus, ames\}@gatech.edu} }
}
\begin{document}
\maketitle
\thispagestyle{empty}
\pagestyle{empty}

\begin{abstract}
\textit{Safety Barrier Certificates} that ensure collision-free maneuvers for teams of differential flatness-based quadrotors are presented in this paper. Synthesized with control barrier functions, the certificates are used to modify the nominal trajectory in a minimally invasive way to avoid collisions. The proposed collision avoidance strategy complements existing flight control and planning algorithms by providing trajectory modifications with provable safety guarantees. The effectiveness of this strategy is supported both by the theoretical results and experimental validation on a team of five quadrotors.
\end{abstract}

\section{INTRODUCTION} \label{sec:intro}
Due to recent advances in the design, control, and sensing technology, teams of quadrotors have become widely used in aerial robotic platforms, e.g., \cite{hehn2015real, mellinger2011minimum}. Their ability to hover and fly agilely in three dimensional space makes quadrotors effective tools for surveillance, delivery, precision agriculture, search and rescue tasks, see e.g., \cite{valenti2007mission, zhang2012application}. When teams of quadrotors are deployed to collaboratively fulfil these higher level tasks, it is crucial to make sure that they do not collide with each other. The focus of this paper is to rectify the nominal flight trajectory, which is generated with exisiting control and planning algorithms for teams of quadrotors, in a minimally invasive way to avoid collisions.

Due to the under-actuated and intrinsically unstable nature of quadrotors, it is often challenging to generate safe trajectories for arbitrary tasks. An artificial potential field approach was used in \cite{kuriki2014consensus} to avoid inter-quadrotor collisions, which can only be qualified to work for the linearized quadrotor model, i.e., near the hovering state. Additionally, real-time trajectory generation approach, utilizing the nonlinear dynamics together with time-optimal planning algorithm, was proposed in \cite{hehn2015real}. However, it is computationally expensive to accommodate collision avoidance constraints when solving optimal control problems in real time. One remedy to this problem is to exploit the differential flatness property of quadrotors, as introduced in \cite{mellinger2011minimum,zhou2014vector}, to simplify the trajectory planning process, while still leveraging the nonlinear dynamics of the quadrotors. This property has been successfully used for flight trajectory planning in cluttered environments \cite{landry2015planning}, as well as avoiding static and moving obstacles \cite{mellinger2012mixed}.

In contrast to the aforementioned methods, the goal of this paper is to modify the trajectories in a provably safe manner that is compatible with existing control and planning techniques, while exploiting the nonlinear dynamics (allowing significant deviation from hovering state and large Euler angles) of teams of quadrotors. To achieve this objective, all collision-free states of the quadrotors are encoded in a safe set. Then, \textit{Safety Barrier Certificates} are synthesized based on the differential flatness property, and a class of non-conservative control barrier functions \cite{ames2014CBF, Xu2015Robustness, nguyen2016exponential} are used to ensure the forward invariance of the safe set. Control barrier functions were used in \cite{wu2016safety, Wu2016Quad} to avoid static/moving obstacles for a single planar or 3D quadrotor. And \textit{Safety Barrier Certificates} have been applied to teams of ground mobile robots as well for collision avoidance \cite{wang2016hetero, wang2016multiobj}. As such, in this paper, the certificates are extended to more complicated multi-quadrotor systems. 
\begin{figure}[t]
\centering
  \resizebox{3in}{!}{\includegraphics{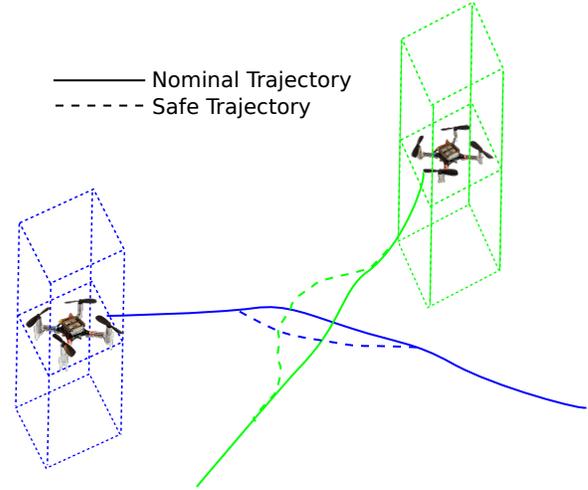}}
  \caption{Safe certificate-based flight maneuvers. The nominal trajectories generated with existing planning algorithms are modified by the safety barrier certificates to avoid collisions. The safe regions of quadrotors are modelled as rectangles to avoid both collisions and air flow disturbances.}
  \label{fig:cfwind}
\end{figure}

The main contributions of this paper are threefold: 1) \textit{Safety Barrier Certificates} are developed that provably ensure the safety of differential flatness based teams of quadrotors; 2) a strategy is developed to modify the nominal trajectory in a minimally invasive way to avoid collisions, which is compatible with existing flight control and planning algorithms; 3) the feasibility of the proposed method is demonstrated through experimental implementation of the \textit{Safety Barrier Certificates} on a team of five palm-sized quadrotors.

The rest of the paper is organized as follows: the differential flatness of quadrotor dynamics and the exponential control barrier functions are briefly revisited in Sections \ref{sec:flatquad} and \ref{sec:cbfrel}. The methods to synthesize the \textit{Safety Barrier Certificates} and minimally-invasively modify nominal trajectories are presented in Section \ref{sec:safequad}. In Section \ref{sec:feasible}, the proof of feasibility along with a virtual vehicle parameterization method to deal with actuator limits are provided. The experimental work is the topic of Section \ref{sec:exp}, and the conclusions are given in Section \ref{sec:conclude}.

\section{Differential Flatness of Quadrotor Dynamics} \label{sec:flatquad}
The quadrotor is a well-modelled dynamical system with forces and torques generated by four propellers and gravity. $Z-Y-X$ Euler angles conventions are used to define the roll ($\phi$), pitch ($\theta$), and yaw ($\psi$) angles between the quadrotor body frame and the world coordinate frame. The relevant coordinate frames and Euler angles are illustrated in Fig. \ref{fig:quadcoord}.
\begin{figure}[h] 
\centering
  \resizebox{2.5in}{!}{\includegraphics{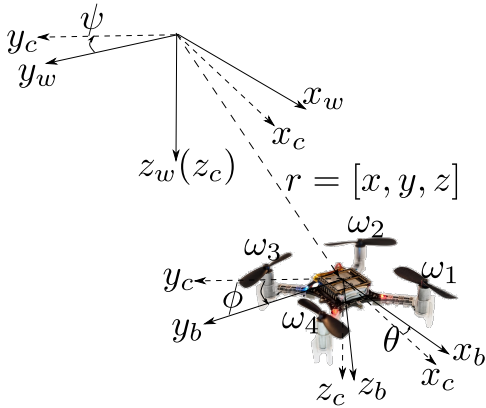}}
  \caption{Quadrotor coordinate frames. The subscripts $w$ denotes the world frame $F_w$, $b$ for the quadrotor body frame $F_b$, and $c$ for an intermediate frame $F_c$ after yaw angle rotation. $\omega_1$ to $\omega_4$ are the angular velocities of the four propellers. The palm-sized quadrotor illustrated is a Crazyflie 2.0 \cite{crazyflie} used in the experiment section.}
  \label{fig:quadcoord}
\end{figure} 

Assuming that the damping and drag-like effects are negligible \cite{hehn2015real}, the dynamics of a quadrotor is governed by the Newton-Euler equation,
\begin{eqnarray} \label{eqn:NEeqn}
m\ddot{r} = mgz_w + f_zz_b, \nonumber \\
J\dot{\omega}_b = \tau - \omega_b\times J\omega_b,\nonumber
\end{eqnarray}
where $r = [x,y,z]^T$ is the position of the center of mass in the world frame $F_w$, $\omega_b = [p,q,r]^T$ is the angular velocity in the body frame $F_b$, and $m$ and $J$ are the mass and inertia matrix of the quadrotor respectively. $f_z$ is the total thrust and $\tau = [\tau_x, \tau_y, \tau_z]$ is the torque generated by the motors. $z_w$ and $z_b$ are the unit vectors in the $Z$-direction in the world and body frames, respectively.

The quadrotor dynamics has been shown to be differentially flat in \cite{mellinger2011minimum, zhou2014vector}, i.e., the states and inputs of the system can be written in terms of algebraic functions of appropriately chosen flat outputs and their derivatives. By using the differential flatness property, trajectories can be generated by leveraging the nonlinear dynamics of the quadrotors, rather than simply viewing the system dynamics as constraints. 

As shown in \cite{zhou2014vector}, the flat output for quadrotor can be chosen as $\sigma = [x,y,z,\psi]^T$. The full state $\xi =[x,y,z,v_x,v_y,v_z,\psi,\theta,\phi,p,q,r]^T$ and input $\mu =[f_z, \tau_x, \tau_y, \tau_z]^T$ of the system can be represented algebraically using the following functions
\begin{eqnarray}\label{eqn:flatmap}
\xi &=& \beta(\sigma, \dot{\sigma}, \ddot{\sigma}, \dddot{\sigma}), \nonumber \\
\mu &=& \gamma(\sigma, \dot{\sigma}, \ddot{\sigma}, \dddot{\sigma},\ddddot{\sigma}), \nonumber
\end{eqnarray}
where we refer to \cite{zhou2014vector} for a detailed derivation and formula of the so-called endogenous transformation $(\beta,\gamma)$.

The flight trajectory can be planned in a greatly simplified flat output space with the differential flatness property. Any sufficiently smooth trajectory $\sigma(t)\in C^4$ in the flat output space can then be converted analytically back into a feasible state trajectory $\xi(t)$ and corresponding control input $\mu(t)$ using the endogenous transformation.

Switching from \cite{zhou2014vector} to this paper, in order to generate four times differentiable flight trajectory, a virtual control input $v\in\mathbb{R}^3$ is created for the integrator dynamics\footnote{Note that the trajectory generated with the integrator dynamics (\ref{eqn:fint}) is not necessarily four times differentiable. In addition, the virtual control input $v\in\mathbb{R}^3$ needs to be Lipschitz continuous for the forth derivative to exist \cite{khalil1996nonlinear}, which will be shown in Section \ref{sec:safequad}. }. For simplicity of planning, the yaw angle is always set to zero ($\psi(t)=0$),
\begin{equation}\label{eqn:fint}
\ddddot{r} = v,
\end{equation}
where $r=\sigma_{1:3}=[x,y,z]^T\in\mathbb{R}^3$. The integrator system can be equivalently written as state space form

\begin{equation}\label{eqn:fintsp}
     \dot{q}
     = \underbrace{\overbrace{\begin{bmatrix}
       0 & 1 & 0 & 0 \\[0.3em]
       0 & 0 & 1 & 0 \\[0.3em]
       0 & 0 & 0 & 1 \\[0.3em]
       0 & 0 & 0 & 0
     \end{bmatrix}}^{F\in\mathbb{R}^{4\times4}}\otimes I_{3\times3} \cdot q }_{f(q)}
       +  \underbrace{\overbrace{\begin{bmatrix}  0 \\[0.3em]  0 \\[0.3em] 0 \\[0.3em]   1 \end{bmatrix}}^{G\in\mathbb{R}^{4}}\otimes I_{3\times3}}_{g(q)} \cdot v,
\end{equation}
where $q=[r^T, \dot{r}^T, \ddot{r}^T, \dddot{r}^T]^T\in\mathbb{R}^{12}$, $\otimes$ is Kronecker product. Note that since collision avoidance requires simultaneous response of three degrees of freedom, the trajectory planning problem here can not be simplified by decoupling three independent degrees of freedom, as was done in \cite{mellinger2011minimum,hehn2015real}. 

\section{Exponential Control Barrier Functions} \label{sec:cbfrel}
With the simplified forth-order integrator model for quadrotors introduced in Section \ref{sec:flatquad}, Control Barrier Functions (CBF) can be used to ensure collision-free flight maneuvers. CBFs are Lyapunov-like functions, which can be used to provably guarantee the forward invariance of a desired set. When collision-free maneuvers are encoded as a safe set, CBFs can then be used to ensure that quadrotors never escape from the safe set, i.e., they never collide. A class of non-conservative CBFs \cite{ames2014CBF, Xu2015Robustness}, which allow the state to grow inside the safe set as opposed to strictly non-increasing as shown in Fig. \ref{fig:cbf}, is adopted to synthesize the \textit{Safety Barrier Certificates}. Consequently, quadrotor flight controllers are provided with more freedom to excise desire maneuvers while remaining safe.

Let the safe set of quadrotor states be defined as
\begin{equation}\label{eqn:setc}
\mathcal{C}_0 = \{q\in \mathbb{R}^{12}~|~ h(q) \geq 0\},
\end{equation}
where $h:\mathbb{R}^{12} \to \mathbb{R}$ is a smooth function.
\begin{figure}[h]
\centering
  \resizebox{1.8in}{!}{\includegraphics{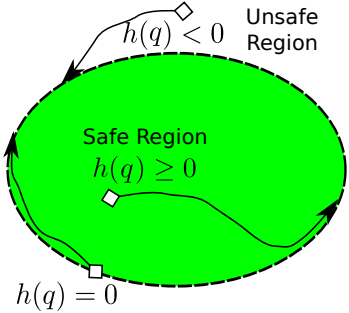}}
  \caption{Examples of non-conservative CBFs for set invariance. The diamonds and curves are example initial states and allowed state trajectories, respectively. The state is allowed to grow or even approach the boundary from inside the safe region. Outside the safe region, the state will converge asymptotically to the safe region, due to CBF constraints.}
  \label{fig:cbf}
\end{figure}

Position based safety constraints, i.e., constraints defined over $r$, are of particular interest for the quadrotor system (\ref{eqn:fintsp}), so that the quadrotor does not collide with static or moving obstacles. With a slight abuse of notation, we denote $y_0(r)=h(q)$ as the output of the system, where $h(q)$ only contains the position variable $r$. Because the virtual control input $v$ is the forth derivative of the position variable $r$, the relative degree of $y_0(r)$ is 4, which means that
\begin{equation}
y_0^{(4)}(r) = L_f^4h(q) + L_gL_f^3h(q)v,
\end{equation}
where the Lie derivative formulation stands for
\begin{equation*}
\dot{h}(q) =\frac{\partial h(q)}{\partial q}(f(q)+g(q)v) = L_fh(q)+L_gh(q)v.
\end{equation*}
Note that due to the high relative degree of $y(r)$, CBFs in \cite{ames2014CBF, Xu2015Robustness} can not be directly applied here. A variation called Exponential Control Barrier Function (ECBF) \cite{nguyen2016exponential} can however be leveraged to ensure the forward invariance of $\mathcal{C}_0$.

\textit{Definition III.1}: Given the dynamical system (\ref{eqn:fintsp}) and a set $\mathcal{C}_0$ defined in (\ref{eqn:setc}), the smooth function $h:\mathcal{C}_0\to \mathbb{R}$ with relative degree of 4 is an Exponential Control Barrier Function (ECBF) if there exists a vector $K\in \mathbb{R}^{1\times4}$ such that $\forall x\in \mathcal{C}_0,$
\begin{equation} \label{eqn:ecbf}
\sup_{u \in U} [L_f^4h(q) + L_gL_f^3h(q)v + K \eta ] \geq0,
\end{equation}
and $h(q(t))\geq C\mathrm{e}^{F-GK}\eta(q_0)\geq 0$ when $h(q_0)\geq 0$, where $\eta=[h(q), L_fh(q), L_f^2h(q), L_f^3h(q)]^T$, $C=[1,0,0,0]$.

The eligible vector $K$ can be obtained by placing the poles of the closed-loop matrix $(F-GK)$ at $p=-[p_1, p_2, ..., p_4]^T$, where $p_i>0$ for $i =1,2,3,4$. With these pole locations, a family of outputs $y_i, i =1,2,3,4$ can be defined as
\begin{equation*}
y_i =(\frac{\mathrm d}{\mathrm d t} + p_1)\circ(\frac{\mathrm d}{\mathrm d t} + p_2)\circ...\circ(\frac{\mathrm d}{\mathrm d t} + p_i)\circ h(q),
\end{equation*}
with $y_0 = h(q)$, and the associated family of super level sets
\begin{equation}
\mathcal{C}_i = \{q\in \mathbb{R}^{12} ~|~y_i(q)\geq 0\}.
\end{equation}

\begin{theorem} \label{thm:ecbf}
{\it Given a safe set $\mathcal{C}_0$ in (\ref{eqn:setc}) and associated ECBF $h(q):\mathcal{C}_0\to \mathbb{R}$, with initially $q_0 \in \mathcal{C}_i, i=0,1,2,3$ for system (\ref{eqn:fintsp}), any Lipschitz continuous controller $v(q)\in K_v(q)$ renders $\mathcal{C}_0$ forward invariant, where}
\begin{equation*}
K_v(q) = \{v\in V~|~L_f^4h(q) + L_gL_f^3h(q)v + K_v \eta \geq0\},
\end{equation*}
and $\eta=[h(q), L_fh(q), L_f^2h(q), L_f^3h(q)]^T$.
\end{theorem}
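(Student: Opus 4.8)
The plan is to reduce the claim to a one‑dimensional comparison (integrating‑factor) estimate that is applied iteratively down the cascade of outputs $y_0,\dots,y_4$. First I would dispose of well‑posedness and regularity. Since the controller $v(\cdot)$ is Lipschitz, the closed‑loop vector field $f(q)+g(q)v(q)$ is Lipschitz in $q$, so (\ref{eqn:fintsp}) has a unique solution $q(t)$ on its maximal interval of existence with $q(\cdot)\in C^1$; because $r^{(4)}=v(q(t))$ is continuous, $r(\cdot)\in C^4$, which is exactly what the footnote needs in order to push $\sigma(t)=[r(t)^T,0]^T$ through the endogenous transformation. Next, since $h$ depends only on $r$ and $v$ enters only $\dddot r$, the output $h$ has relative degree $4$, i.e. $L_gh=L_gL_fh=L_gL_f^2h\equiv 0$; hence along the trajectory $\frac{d^k}{dt^k}h(q(t))=L_f^kh(q(t))$ for $k=0,1,2,3$ and $\frac{d^4}{dt^4}h(q(t))=L_f^4h(q(t))+L_gL_f^3h(q(t))\,v(q(t))$, and in particular $\eta(q(t))=[\,h,\dot h,\ddot h,\dddot h\,]^T$ along the trajectory, with $h(q(\cdot))\in C^4$.

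Second, I would record the cascade structure of the $y_i$. Each first‑order operator $\frac{d}{dt}+p_j$ is linear with constant coefficients, so they all commute; therefore the definition of $y_i$ can be rewritten as $y_i=\bigl(\frac{d}{dt}+p_i\bigr)y_{i-1}$, i.e. $\dot y_{i-1}=y_i-p_i\,y_{i-1}$ along any trajectory, for $i=1,2,3,4$, with $y_0=h(q(t))$. Moreover, expanding $y_4=\bigl(\frac{d}{dt}+p_1\bigr)\circ\cdots\circ\bigl(\frac{d}{dt}+p_4\bigr)\circ h$ and substituting the derivative identities above gives $y_4=L_f^4h(q)+L_gL_f^3h(q)\,v+K\eta$, where $K\in\mathbb{R}^{1\times4}$ is the coefficient vector obtained by expanding $(s+p_1)\cdots(s+p_4)=s^4+K_4s^3+K_3s^2+K_2s+K_1$, which is precisely the pole‑placement gain placing the eigenvalues of $F-GK$ at $-p_1,\dots,-p_4$ — the same $K$ appearing in $K_v(q)$. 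Consequently the hypothesis $v(q(t))\in K_v(q(t))$ is literally the inequality $y_4(q(t))\geq 0$ for all $t$ in the interval of existence.

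Third, I would run the comparison argument backwards through the cascade. For a scalar ODE $\dot z=w(t)-p\,z$ with $p>0$ and $w\geq 0$, the identity $z(t)=e^{-pt}z(0)+\int_0^t e^{-p(t-s)}w(s)\,ds$ shows $z(0)\geq 0\Rightarrow z(t)\geq 0$. Applying this with $(z,w,p)=(y_3,y_4,p_4)$ and $q_0\in\mathcal{C}_3$ gives $y_3(q(t))\geq 0$; then with $(y_2,y_3,p_3)$ and $q_0\in\mathcal{C}_2$ gives $y_2(q(t))\geq 0$; likewise $y_1(q(t))\geq 0$ from $q_0\in\mathcal{C}_1$; and finally $(y_0,y_1,p_1)$ together with $q_0\in\mathcal{C}_0$ yields $h(q(t))=y_0(q(t))\geq 0$ for all $t$. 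Hence $q(t)\in\mathcal{C}_0$ throughout, i.e. $\mathcal{C}_0$ is forward invariant (and in fact $q(t)\in\bigcap_{i=0}^{3}\mathcal{C}_i$ is maintained). Iterating the integral formula also reproduces the bound $h(q(t))\geq C\,e^{(F-GK)t}\eta(q_0)$ stated in Definition III.1, since $C\,e^{(F-GK)\tau}G\geq 0$ is the nonnegative $4$‑fold convolution $e^{-p_1\cdot}\!*\cdots* e^{-p_4\cdot}$.

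I expect the only genuine obstacle to be bookkeeping rather than conceptual content: one must verify carefully that the gain $K$ used in $K_v(q)$ is exactly the coefficient vector produced by expanding the product of the four first‑order operators, so that the ECBF inequality and $y_4\geq 0$ are the same constraint, and one must check the regularity chain (Lipschitz $v\Rightarrow q\in C^1\Rightarrow r\in C^4\Rightarrow h(q(\cdot))\in C^4$) so that every derivative manipulated above is classical. I would also note that the argument lives on the maximal interval of existence; forward completeness, if wanted, needs a separate growth bound and is not required for the invariance statement itself.
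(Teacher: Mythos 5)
Your proposal is correct and follows essentially the same route the paper takes (and merely sketches by citing the ECBF reference): identify the pole-placement gain $K$ with the coefficients of $(s+p_1)\cdots(s+p_4)$ so that the ECBF inequality is exactly $y_4\geq 0$, then apply the Comparison Lemma recursively down the cascade $y_3,y_2,y_1,y_0$ to conclude forward invariance of each $\mathcal{C}_i$ and hence of $\mathcal{C}_0$. Your added care about regularity (Lipschitz $v$ giving a classical $C^4$ output along the closed-loop solution) and the maximal-interval caveat simply fill in details the paper delegates to its citations.
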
 

We refer to \cite{nguyen2016exponential} for the detailed proof of general cases of this theorem. The basic idea is to design a stabilizing controller for the system using pole placement, then use Comparison Lemma \cite{khalil1996nonlinear} to show recursively that $C_i$, $i=0,1,2,3$, is forward invariant.

\section{Safety Barrier Certificates for Teams of Quadrotors}\label{sec:safequad}
Two of the main tools, i.e., differential flatness property of quadrotor and ECBF for a single quadrotor, for constructing \textit{Safety Barrier Certificates} have been revisited in sections \ref{sec:flatquad} and \ref{sec:cbfrel}. This section focuses on assembling \textit{Safety Barrier Certificates} for teams of quadrotors utilizing these tools.

\subsection{Safety Region Modelled With Super-ellipsoids} \label{sec:superel}
Consider a team of quadrotors indexed by $\mathcal{M} = \{1,2,3,\ldots, m\}$, the dynamics of quadrotors are modelled as forth-order integrators with virtual inputs $v_i\in\mathbb{R}^3$,
\begin{equation}\label{eqn:finti}
\ddddot{r}_i = v_i, ~i\in\mathcal{M}
\end{equation}
where $r_i=[x_i,y_i,z_i]^T$ is the position of the center of mass of quadrotor $i$. The full state of quadrotor $i$ is represented by $q_i=[r_i^T, \dot{r}_i^T, \ddot{r}_i^T, \dddot{r}_i^T]^T \in\mathbb{R}^{12}$. Let $r=[r_1^T,r_2^T,...,r_m^T]^T\in\mathbb{R}^{3m}$ and $v=[v_1^T,v_2^T,...,v_m^T]^T\in\mathbb{R}^{3m}$ denote the aggregate position and virtual control of the team of quadrotors.

In order to ensure the safety of the team of quadrotors, \textit{all} pairwise collisions between quadrotors need to be avoided. In addition, quadrotors can not fly directly over each other due to the air flow disturbance generated by propellers as illustrated in Fig. \ref{fig:cfwind}. During actual flights, the bottom quadrotor generally goes unstable or even crash due to the strong wind blowing from above.

To accommodate these safety requirements, each quadrotor is encapsulated with a `rectangle shape' super-ellipsoid\footnote{A super-ellipsoid is a solid geometry generally defined with the implicit function $[(\frac{x}{a})^r + (\frac{y}{b})^r]^\frac{n}{r} + (\frac{z}{c})^n\leq 1$ with $r,n\in\mathbb{R}^+$\cite{barr1981superquadrics}. $r=n=4$ is selected to approximate a `rectangle shape' here.}. Considering any pair of quadrotors $(i,j)$, the pairwise safe set is defined as
\begin{eqnarray}
\mathcal{C}_{ij} &=& \{(q_i,q_j)~|~h_{ij}(q_i,q_j)\geq 0\}, \label{eqn:setcij} \\
h_{ij}(q_i,q_j) &=& (x_i-x_j)^4 + (y_i-y_j)^4 + (\frac{z_i-z_j}{c})^4 - D_s^4,\nonumber
\end{eqnarray}
where $D_s$ is the safety distance, $c$ is the scaling factor along the Z axis caused by air flow disturbance. In practice, $c$ is obtained by flying two quadrotors over each other and identify the critical separation distance at which the bottom quadrotor goes unstable. Two quadrotors are considered safe when two `rectangle shape' super-ellipsoids do not intersect with each other as shown in Fig. \ref{fig:cf2ellip}.
\begin{figure}[h]
\centering
\begin{subfigure}{.22\textwidth}
  \centering
  \resizebox{1.84in}{!}{\includegraphics{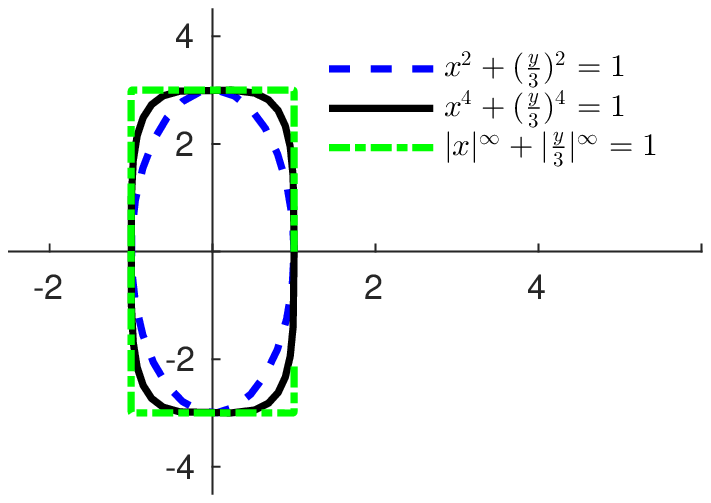}}
  \caption{XY cross section of a super-ellipsoid}
  \label{fig:f1norm}
\end{subfigure}%
\hspace{0.1in}
\begin{subfigure}{.2\textwidth}
  \centering
  \resizebox{1.0in}{!}{\includegraphics{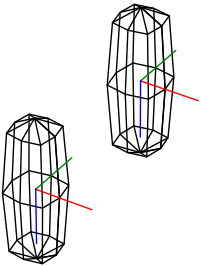}}
  \label{fig:f2frame}
  \caption{Two quadrotors modelled with super-ellipsoids}
\end{subfigure}%
\caption{Safe regions of quadrotors modelled with super-ellipsoids.}
\label{fig:cf2ellip}
\end{figure}

Note that a `rectangle shape' super-ellipsoid is chosen to appropriately approximate the safe region, since the yaw angle of quadrotors is trivially set to $\psi=0$ and quadrotors are flying with `X' configuration\footnote{`X' configuration is when the quadrotor is configured with two propellers facing forward as shown in Fig. \ref{fig:quadcoord}. `X' configuration is often favored due to improved flight agility and onboard camera view clearance. }. Alternatively, a `cylinder shape' super-ellipsoid can be picked for arbitrary yaw angles,
\begin{equation*}
\bar{h}_{ij}(q_i,q_j) = [(x_i-x_j)^2 + (y_i-y_j)^2]^\frac{n}{2} + (\frac{z_i-z_j}{c})^n - D_s^n.\nonumber
\end{equation*}

Since the pairwise safe set $\mathcal{C}_{ij}$ is defined in terms of position variables $r_{i},r_{j}$, the ECBF candidate $h_{ij}(q_i,q_j)$ has a relative degree of 4. $(q_i,q_j)$ is omitted hereafter for notation convenience. To ensure the forward invariance of $\mathcal{C}_{ij}$, virtual controls of quadrotor $i$ and $j$ need to satisfy
\begin{equation}\label{eqn:eqnvij}
\ddddot{h}_{ij} + K\cdot[h_{ij}, \dot{h}_{ij}, \ddot{h}_{ij}, \dddot{h}_{ij}]^T \geq 0,
\end{equation}
where $\ddddot{h}_{ij}$ is affine in $v_i,v_j$. Thus, the safety barrier constraint (\ref{eqn:eqnvij}) can be rearranged into a linear constraint on the virtual control when $q_i,q_j$ are given,
\begin{equation}\label{eqn:cbfconstr}
A_{ij}(q_i,q_j)\cdot v \leq b_{ij}(q_i,q_j),
\end{equation}
where $A_{ij}(q_i,q_j)=-[0,...,\underbrace{1}_{i \text{th}},...,\underbrace{-1}_{j \text{th}},...,0] \otimes [4(x_i-x_j)^3,4(y_i-y_j)^3,4\frac{(z_i-z_j)^3}{c^4}]\in\mathbb{R}^{1\times 3m}$, \\  $b_{ij}(q_i,q_j)=K\cdot[h_{ij}, \dot{h}_{ij}, \ddot{h}_{ij}, \dddot{h}_{ij}]^T + (24\dot{\delta}^4 +144\delta\circ\dot{\delta}^2\circ\ddot{\delta} + 36\bar{r}^2\circ\ddot{\delta}^2 +  48\bar{r}^2\circ\dot{\delta}\circ\dddot{\delta})\cdot \mathbf{1}_3$, $\delta=[x_i-x_j, y_i-y_j,\frac{z_i-z_j}{c}]$,\\ and $\circ$ stands for elementwise vector product.

The \textit{Safety Barrier Certificates} are formed by assembling all the pairwise safety barrier constraints
\begin{flalign}\label{eqn:ksafe}
&K_{\text{safe}} = \hfill \\
&\{v\in\mathbb{R}^{3m}~|~A_{ij}(q_i,q_j)\cdot v \leq b_{ij}(q_i,q_j), \forall i< j, i,j\in\mathcal{M}\}. \nonumber
\end{flalign}
As long as the virtual control $v$ satisfies the \textit{Safety Barrier Certificates} $K_{safe}$ and corresponding initial conditions, the team of quadrotors is guaranteed to be safe by \textit{Theorem \ref{thm:ecbf}}.

\subsection{Modifying the Nomimal Trajectory with Safety Barrier Certificates} \label{sec:reftraj}
It is often difficult to generate provably collision-free trajectories when planning the nominal trajectory for teams of quadrotors. Instead, we can first plan the flight trajectory without considering collisions, and then modify it using \textit{Safety Barrier Certificates} in a minimally invasive way to avoid collisions. Here we consider the case when a nominal trajectory $\hat{r}(t) = [\hat{r}_1^T(t), \hat{r}_2^T(t), ..., \hat{r}_m^T(t)]\in C^4$ is provided. For generality, the preplanned nominal trajectory can be generated by any methods, e.g., optimal control approach \cite{hehn2015real}, vector field approach\cite{zhou2014vector}, or parametrized curves \cite{mellinger2011minimum}, as long as it is sufficiently smooth, i.e., four times differentiable. This smooth reference trajectory $\hat{r}_i(t)$ is then tracked by a simulated integrator model using a pole placement controller with a simulated time step of $0.02s$ (simulates the 50Hz flight controller),
\begin{equation}\label{eqn:poleCLF}
\hat{v}_i = \ddddot{r}_i - K\cdot [\hat{r}_i ~\dot{\hat{r}}_i ~\ddot{\hat{r}}_i ~ \dddot{\hat{r}}_i]^T,
\end{equation}
where $K$ is picked to be the same as used for ECBFs in (\ref{eqn:eqnvij}) to trade-off tracking performance and safety enforcement.

To respect the nominal control $\hat{v}_i$ as much as possible, a quadratic program is used to minimize the difference between the actual and nominal control, which is what is meant by minimally invasive trajectory modifications,
\begin{equation}
\label{eqn:QPcontroller}
 \begin{aligned}
v^* =  & \:\: \underset{v}{\text{argmin}}
 & & J(v) = \sum_{i=1}^{N} \|{v}_{i} - \hat{v}_{i} \|^2 &\\
 & \qquad \text{s.t.}
 & & A_{ij}(q_i,q_j)v \leq b_{ij}(q_i,q_j),\: \qquad  \forall i< j,  &  \\
 &
 & &    \|  v_i\|_\infty  \leq \alpha_i,\:\qquad \forall i \in\mathcal{M},
 \end{aligned}
\end{equation}
where $\alpha_i$ is a bound on the virtual snap control. It can be observed that the actual controller $v_i$ will be the same as $\hat{v}_i$, if it is safe. The controller will only be rectified if it violates the \textit{Safety Barrier Certificates}, i.e., if it leads to collisions.

The dynamics of the simulated forth-order integrator system is integrated forward using forward Euler method. Since (\ref{eqn:poleCLF}) is a feedback tracking controller for a fully controllable integrator system, the numerical stability of the simple forward Euler method is already very reliable.  In addition, since the nominal trajectory $\hat{r}(t)$ is four times differentiable, the nominal controller $\hat{v}_i$ in (\ref{eqn:poleCLF}) will be Lipschitz continuous. According to \cite{morris2013sufficient}, the controller $v_i$ generated by the QP (\ref{eqn:QPcontroller}) will be Lipschitz continuous as well. Thus, the rectified collision-free trajectory $r(t)$ will still be four times differentiable. Differential flatness property of quadrotors can still be used to execute the rectified collision-free trajectory $r(t)$.

The advantages of using \textit{Safety Barrier Certificates} to enforce collision-free maneuvers are as follows: 
\begin{enumerate}
\item they can be used in conjunction with conventional trajectory  generation methods to provide provable safety guarantees;
\item modifications made to $\hat{r}(t)$ is minimal possible in the least-squares sense, and the rectified collision-free trajectory $r(t)$ is still four times differentiable.
\end{enumerate}

\section{Feasibility and Parameterization}\label{sec:feasible}
Section \ref{sec:safequad} provides a systematic approach to modify pre-planned trajectory in a minimally-invasive and smooth way using \textit{Safety Barrier Certificates}. However, it is not clear whether the quadratic program in (\ref{eqn:QPcontroller}) is always feasible or not. In addition, the generated trajectory might require excessive amount of control effort to execute. This section provides theoretical guarantees and parameterization method to address those issues.
\subsection{Proof of Existence of Solution}\label{sec:pffeasible}
The following theorem guarantees that a feasible safe solution to the QP problem (\ref{eqn:QPcontroller}) always exists.
\vspace{0.1in}
\begin{theorem} \label{thm:feasible}
{\it Given a team of quadrotors indexed by $\mathcal{M}$ with dynamics given in (\ref{eqn:finti}), the aggregate admissible safe control space $K_{\text{safe}}$ in (\ref{eqn:ksafe}) allowed by \textit{Safety Barrier Certificates} is guaranteed to be non-empty.}
\end{theorem}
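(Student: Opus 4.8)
The plan is to show that $K_{\text{safe}}$ is non-empty by exhibiting an explicit element, or more precisely by showing that the system of linear inequalities $A_{ij}(q_i,q_j)v \leq b_{ij}(q_i,q_j)$, $\forall i<j$, always admits a solution. The natural candidate to try first is a \emph{symmetric} control: set $v_i = -\kappa\, q_i$ for a suitable scalar (or just $v = 0$ if that already works on the boundary cases), but the cleaner route is to observe the skew-symmetric structure of the constraint matrix. Each $A_{ij}$ has support only on blocks $i$ and $j$, with block $i$ equal to $-\delta^{(3)} := -[4(x_i-x_j)^3, 4(y_i-y_j)^3, 4(z_i-z_j)^3/c^4]$ and block $j$ equal to $+\delta^{(3)}$. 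So the left-hand side of constraint $(i,j)$ is $\delta^{(3)}\cdot(v_j - v_i)$. This means that if we pick $v_i$ so that each quadrotor's virtual control is a \emph{gradient-descent-like} term pushing it away from all others — concretely $v_i = \sum_{k\neq i} \lambda_{ik}\,\nabla_{r_i} h_{ik}$ with $\lambda_{ik}\geq 0$ large enough — then each term $\delta^{(3)}\cdot(v_j-v_i)$ can be made as negative as needed.

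The key steps, in order, would be: (i) rewrite constraint $(i,j)$ in the form $-\delta_{ij}^{(3)}\cdot(v_i - v_j) \leq b_{ij}$ and note $b_{ij} = K\eta_{ij} + (\text{terms depending only on } q_i,q_j)$, i.e., the right-hand side is a fixed finite number once the state is fixed; (ii) argue that because $v$ ranges over an unbounded set $\mathbb{R}^{3m}$ and the "bad" direction for each constraint is $v_i - v_j$ aligned with $\delta_{ij}^{(3)}$, we can choose $v$ to make every left-hand side arbitrarily negative \emph{simultaneously} — this is where I would invoke either a Farkas-type / LP-duality argument (the system $Av\leq b$ is infeasible iff there is $y\geq 0$ with $y^TA = 0$ and $y^Tb < 0$) or a direct construction; (iii) rule out the infeasibility certificate: if $\sum_{i<j} y_{ij} A_{ij} = 0$ with $y_{ij}\geq 0$, show this forces $y_{ij}\,\delta_{ij}^{(3)} = 0$ for all pairs (by looking at the block structure / a connectivity argument on the graph of pairs whose $\delta_{ij}^{(3)}\neq 0$), hence $\sum y_{ij} b_{ij} \geq 0$, contradicting $<0$; the degenerate case $\delta_{ij}^{(3)} = 0$ (two quadrotors coincident in all three coordinates) must be handled separately — but then $h_{ij} < 0$ already, or one argues that with $D_s>0$ the relevant states are bounded away from this, or that the corresponding constraint row is identically zero so it imposes nothing as long as $b_{ij}\geq 0$.

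The main obstacle I expect is precisely that degenerate configuration: when $\delta_{ij}^{(3)} = (0,0,0)$, the constraint row $A_{ij}$ vanishes and feasibility of that row reduces to checking $0 \leq b_{ij}(q_i,q_j)$, which need not hold for arbitrary higher-order states $\dot r, \ddot r, \dddot r$. The resolution must use the hypothesis implicit in Theorem \ref{thm:ecbf} that we start inside $\mathcal{C}_i$ for $i=0,1,2,3$, so that $\eta_{ij}$ has nonnegative entries and the pole-placement gain $K$ makes $K\eta_{ij}\geq 0$; combined with a sign/structure check on the remaining polynomial terms in $b_{ij}$, this should give $b_{ij}\geq 0$ whenever the row degenerates. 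A secondary subtlety is that $K_{\text{safe}}$ in (\ref{eqn:ksafe}) is stated without the $\|v_i\|_\infty\leq\alpha_i$ box constraint of (\ref{eqn:QPcontroller}); I would prove non-emptiness of the unbounded $K_{\text{safe}}$ here and note that the box constraint is addressed separately via the virtual-vehicle parameterization in the next subsection (so I would not attempt to reconcile the two in this proof).
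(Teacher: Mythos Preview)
Your Farkas-lemma framework is the dual of the paper's minimax argument, and both reduce to the same core claim: no nontrivial nonnegative combination $\sum_{i<j} y_{ij} A_{ij}$ can vanish. Where you diverge from the paper is in how you propose to establish that claim, and that is where the gap sits.

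Your step (iii) ``block structure / connectivity argument'' does not do the job. Writing out block $k$ of $\sum_{i<j} y_{ij}A_{ij}=0$ gives $\sum_{l\neq k} y_{\{k,l\}}\,\delta^{(3)}_{kl}=0$ for each $k$, but these vector equations are coupled across $k$, and neither the incidence structure nor any graph-connectivity reasoning by itself forces the $y_{ij}$ to zero; you need a quantitative sign argument. The paper's device is to recognise that $-\sum_{i<j} y_{ij}A_{ij}$ is precisely $\nabla_r F(r)$ for the convex, nonnegative function
\(
F(r)=\sum_{i<j}y_{ij}\bigl[(x_i-x_j)^4+(y_i-y_j)^4+((z_i-z_j)/c)^4\bigr].
\)
A stationary point of a convex function is a global minimiser, so $\nabla F=0$ forces $F=0$, hence every term $y_{ij}(r_i-r_j)=0$. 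Equivalently, and this is the one line you were missing: dot the vanishing row-combination against the position vector $r$ and use $\delta^{(3)}_{ij}\cdot(r_i-r_j)=(x_i-x_j)^4+(y_i-y_j)^4+((z_i-z_j)/c)^4\ge 0$ to get a sum of nonnegative terms equal to zero.

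Your excursion into the degenerate case $\delta^{(3)}_{ij}=0$ and into the sign of $b_{ij}$ is then unnecessary. The theorem is stated for states in the safe set, where $r_i\neq r_j$ and hence $\delta^{(3)}_{ij}\neq 0$ for every pair; combined with $y_{ij}(r_i-r_j)=0$ this gives $y=0$ outright, so $y^{T}b=0$ regardless of the signs of the individual $b_{ij}$, and Farkas' alternative is ruled out without ever inspecting $K\eta_{ij}$ or the polynomial remainder terms. Your final remark---that the box constraint $\|v_i\|_\infty\le\alpha_i$ is outside the scope of this theorem and is handled by the virtual-vehicle parameterization---matches the paper exactly.
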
 
\begin{proof}
If any pair of quadrotors does not collide with each other, we have 
$r_i\neq r_j$ and $A_{ij}(q_i,q_j)\neq 0$, $\forall i< j$. Thus, $h_{ij}(q_i,q_j)$ are individually valid control barrier functions. However, this does not imply that a common controller exists such that all constraints are satisfied.

In order to prove that a common solution exists, let $H\in\mathbb{R}^{1\times 3m}$ be a convex combination of $-A_{ij}(q_i,q_j)\in\mathbb{R}^{1\times 3m}$,
\begin{equation}
H = -\underset{i< j}{\Sigma} \alpha_{ij} A_{ij}(q_i,q_j),
\end{equation}
where $[\alpha_{ij}]\in\mathcal{D}$, $\mathcal{D} = \{[\alpha_{ij}]~|~\underset{i< j}{\Sigma}\alpha_{ij}=1, \alpha_{ij}\geq 0\}$.

It can be observed that $H$ is the gradient of a convex function $F(r):\mathbb{R}^{3m}\to \mathbb{R}$, where $r=[r_1^T, r_2^T, ..., r_m^T]^T$ denotes the aggregate position of the quadrotors,
\begin{equation}
F(r) = \underset{i< j}{\Sigma} \alpha_{ij} [(x_i-x_j)^4+(y_i-y_j)^4+(\frac{z_i-z_j}{c})^4].
\end{equation}
Notice that $F(r)$ is non-negative and has a global minimum of 0, when $\alpha_{ij}(r_i-r_j)=0, \forall i< j$. Since all local minimums of the convex function $F(r)$ are global minimums \cite{boyd2004convex}, it can be deduced that its gradient $\nabla F(r)=H=0$ if and only if $\alpha_{ij}(r_i-r_j)=0, \forall i< j$.

Since $r_i\neq r_j$ (quadrotors do not collide), it can be further inferred that $H=0$ if and only if $\alpha_{ij}=0, \forall i< j$, which violates the fact that $\underset{i< j}{\Sigma}\alpha_{ij}=1$. Thus we have shown by contradiction that $H\neq 0$. Using this fact, it is guaranteed that $Hv + \underset{i<j}{\Sigma} \alpha_{ij} b_{ij}(q_i,q_j)>0$ always has a solution for any convex combination of $-A_{ij}(q_i,q_j)$, i.e.,
\begin{equation*}
\min_{[\alpha_{ij}]\in \mathcal{D}}\sup_{v \in \mathbb{R}^{3m}} \{\underset{i<j}{\Sigma} \alpha_{ij} [-A_{ij}(q_i,q_j) v+ b_{ij}(q_i,q_j)]\}>0.
\end{equation*}
Notice that $\mathcal{D}$ is closed and bounded, and $\mathbb{R}^{3m}$ is closed. In this case, we can exchange $\min$ and $\sup$ by using the minmax theorem \cite{rockafellar2015convex}, i.e.,
\begin{eqnarray}
&&\min_{[\alpha_{ij}]\in \mathcal{D}}\sup_{v \in \mathbb{R}^{3m}} \{\underset{i<j}{\Sigma} \alpha_{ij} [-A_{ij}(q_i,q_j) v+ b_{ij}(q_i,q_j)]\} \nonumber\\
&=&\sup_{v \in \mathbb{R}^{3m}}\min_{[\alpha_{ij}]\in \mathcal{D}} \{\underset{i<j}{\Sigma} \alpha_{ij} [-A_{ij}(q_i,q_j) v+ b_{ij}(q_i,q_j)]\} \nonumber\\
&=&\sup_{v \in \mathbb{R}^{3m}}\min_{i< j}   \{-A_{ij}(q_i,q_j) v+ b_{ij}(q_i,q_j)\}>0, \nonumber
\end{eqnarray}
which is equivalent to say that a common controller $v$ that satisfies all the pairwise barrier constraints (\ref{eqn:cbfconstr}) always exists, i.e., $K_{\text{safe}}$ is non-empty.
\end{proof}
\vspace{0.1in}
The idea of control sharing barrier function used in the proof is similar to control-sharing and merging control Lyapunov functions introduced in \cite{grammatico2014control}.

\subsection{Virtual Vehicle Parameterization} \label{sec:param}
Collision avoidance maneuvers of quadrotors might sometimes lead to significant deviations from reference trajectories. After the collision hazard disappears, excessive control effort might be required for the quadrotors to return to the reference point along the nominal trajectory. To address this issue, a virtual vehicle parameterization method proposed in \cite{egerstedt2001control} is adopted. 

The basic idea of virtual vehicle paramterization is to slow down or speed up the virtual vehicle (reference point $\hat{r}(t)$ on the nominal trajectory) as the tracking error $e_r=\|r-\hat{r}\|$ increases or decreases. In this particular application, we use the following virtual time variable to parameterize the reference point on the nominal trajectory
\begin{equation}
\dot{s} = \mathrm{e}^{-k_s\|e_r\|^2},
\end{equation}
where $k_s$ is the virtual parameterization gain. Instead of $\hat{r}(t)$, $\hat{r}(s(t))$ is fed into the \textit{Safety Barrier Certificates} rectifier shown in Fig. \ref{fig:flowchart}. Intuitively, the virtual vehicle will slow down ($\dot{s}<1$) when the tracking error is large; it will travel exactly at the desire speed ($\dot{s}=1$) when the tracking error is zero. This parameterization mechanism is intended to reduce the amount of control effort when the quadrotor has to deviate away from the virtual vehicle to avoid collisions.

To demonstrate the effectiveness of virtual vehicle parameterization, a simulation of two quadrotors flying pass each other is presented. The trajectories of two quadrotors are illustrated in Fig. \ref{fig:cf2sim}. 
\begin{figure}[h]
\centering
  \resizebox{3.2in}{!}{\includegraphics{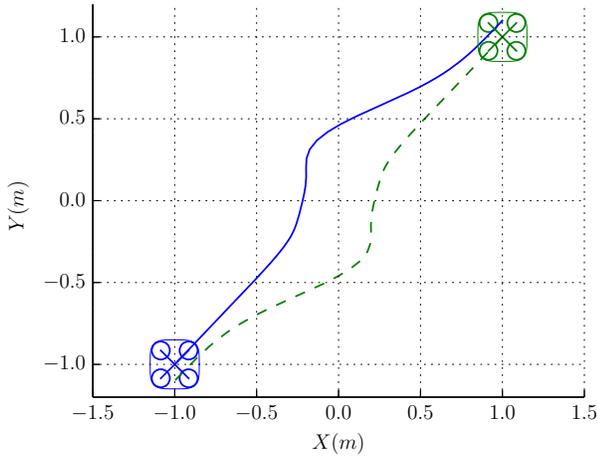}}
  \caption{Trajectories of two quadrotors flying pass each other plotted in X-Y plane. Control efforts for performing this task are illustrated in Fig. \ref{fig:cf2param}.}
  \label{fig:cf2sim}
\end{figure}

In this example, the collision avoidance manuever requires a maximum pitch angle of $70^\circ$ and a maximum thrust of $2.8$ times hovering thrust without parameterization ($k_s=0$) as shown in Fig. \ref{fig:cf2param}. In contrast, a maximum pitch angle of $25^\circ$ and a maximum thrust of $1.2$ times hovering thrust are needed with parameterization ($k_s=100$). In both cases, the desired task is accomplished within 6s.
\begin{figure*}[t!]
\centering
  \resizebox{4in}{!}{\includegraphics{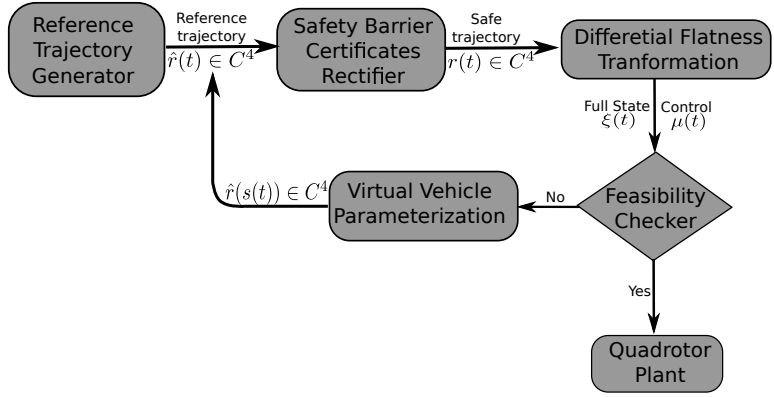}}
  \caption{Flowchart of safe trajectory generation strategy.}
  \label{fig:flowchart}
\end{figure*}
\begin{figure}[h]
\centering
  \resizebox{3.4in}{!}{\includegraphics{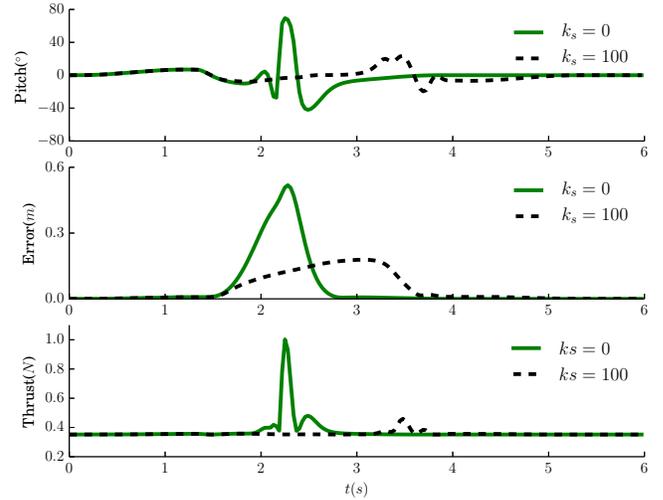}}
  \caption{Comparisons of control efforts for the quadrotor using ($k_s=100$) or without using ($k_s=0$) virtual vehicle parameterization. }
  \label{fig:cf2param}
\end{figure}

As proved in section \ref{sec:pffeasible}, the QP in (\ref{eqn:QPcontroller}) will always generate a feasible solution. However, the required control effort to avoid collisions might be excessive. In this circumstance, virtual vehicle parameterization method can be used to reduces instantaneous control efforts. By increasing the virtual parameterization gain $k_s$ significantly, the quadrotors will be granted considerably more time to perform collision avoidance manuevers. Thus, the parameterization mechanism will generate a feasible trajectory that satisfies given actuator constraints.

\subsection{Overview of Safe Trajectory Generation Strategy}
An overview of the safe trajectory generation strategy is summarized in Fig. \ref{fig:flowchart}. A smooth reference trajectory $\hat{r}(t)\in C^4$ is first fed into the safety barrier certificates rectifier, where the QP controller (\ref{eqn:QPcontroller}) is used to enforce collision-free flight maneuvers. The rectified smooth safe trajectory $r(t)\in C^4$ is then transformed into quadrotor states and controls using the differential flatness property. The full states and controls are checked to ensure that actuator limits are not violated. Otherwise, the reference trajectory is parameterized $\hat{r}(s(t))\in C^4$  and fed into the safety barrier certificates rectifier again. This process can be repeated until the virtual vehicle parameterization strategy yields appropriate flight trajectory that respects both safety and actuator constraints. In the end, the generated feasible safe trajectory is sent to execute on the team of quadrotors.

\section{Experiment} \label{sec:exp}
The \textit{Safety Barrier Certificates} are implemented on a team of five palm-sized quadrotors (Crazyflie 2.0). All communication channels between different devices and control programs are coordinated by a ROS server. The real-time positions and Euler angles of quadrotors are tracked by the Optitrack motion capture system with an update rate of 50Hz. The 50Hz quadrotor motion controller is developed based on the ROS driver for Crazyflie 2.0 built by ACTLab at USC \cite{HoenigMixedReality2015}. To ensure stable trajectory tracking behavior, Euler angles and Euler angle rates generated with the differential flatness property are sent to quadrotors as control commands. The overall quadrotor control diagram is shown in Fig. \ref{fig:ros}.
\begin{figure}[h]
\centering
  \resizebox{3.3in}{!}{\includegraphics{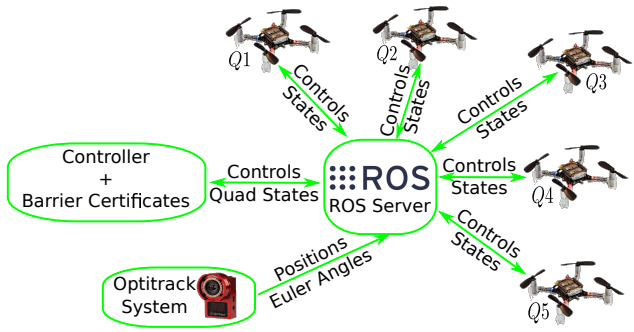}}
  \caption{Quadrotor control system diagram}
  \label{fig:ros}
\end{figure}

\subsection{Showcase: Flying Through a Static Formation}
In the first experiment, one of the quadrotor ($Q5$) is commanded to fly through a static formation consisting of four quadrotors ($Q1-Q4$) as shown in Fig. \ref{fig:exp1static}.
\begin{figure}[h]
\centering
  \resizebox{3.0in}{!}{\includegraphics{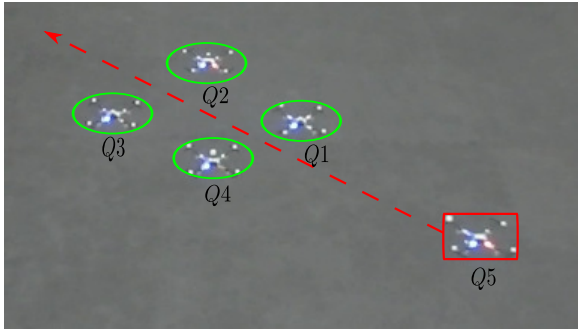}}
  \caption{Snapshot from a experiment of quad $Q5$ flying through a static formation consisting of four quads $Q1-Q4$. The video of this experiment is available online \cite{Quad:video}.}
  \label{fig:exp1static}
\end{figure}

Quadrotors ($Q1-Q4$) are designed to hover at four places with reference trajectories given as
\begin{eqnarray}
\hat{r}_1(t) =& \begin{bmatrix} 0.25 \\ 0 \\ -0.8 \end{bmatrix}, 
\hspace{0.08in} \hat{r}_2(t) =& \begin{bmatrix} 0 \\ 0.25 \\ -0.8 \end{bmatrix}, \nonumber \\
\hat{r}_3(t) =& \begin{bmatrix} -0.25 \\ 0 \\ -0.8 \end{bmatrix}, 
\hat{r}_4(t) =& \begin{bmatrix} 0 \\ -0.25 \\ -0.8 \end{bmatrix}. \nonumber
\end{eqnarray}

Another quadrotor ($Q5$) is designed to go from $p_0 = [0.6, -0.6, -0.8]^T$ to $p_1 = [-0.6, 0.6, -0.8]^T$. The nominal trajectory can be generated as
\begin{equation*}
\hat{r}_5(t) = \text{BezierInterp}(p_0,p_1),
\end{equation*}
where the \text{BezierInterp} function stands for the Bezier curve interpolation algorithm between two waypoints.

The safety distance between quadrotors is specified as $D_s=25cm$ to account for the tracking frames and controller tracking errors. Intuitively, quadrotors will collide with each other if nominal trajectories are directly executed. During the experiment, the \textit{Safety Barrier Certificates} are applied to modify the nominal trajectories in a minimally invasive way to avoid collisions. As demonstrated in Fig. \ref{fig:exp1}, $Q5$ successfully navigated through the static formation of four quadrotors within 3.4s.
\begin{figure}[H]
\centering
\begin{subfigure}{.24\textwidth}
  \centering
  \resizebox{1.7in}{!}{\includegraphics{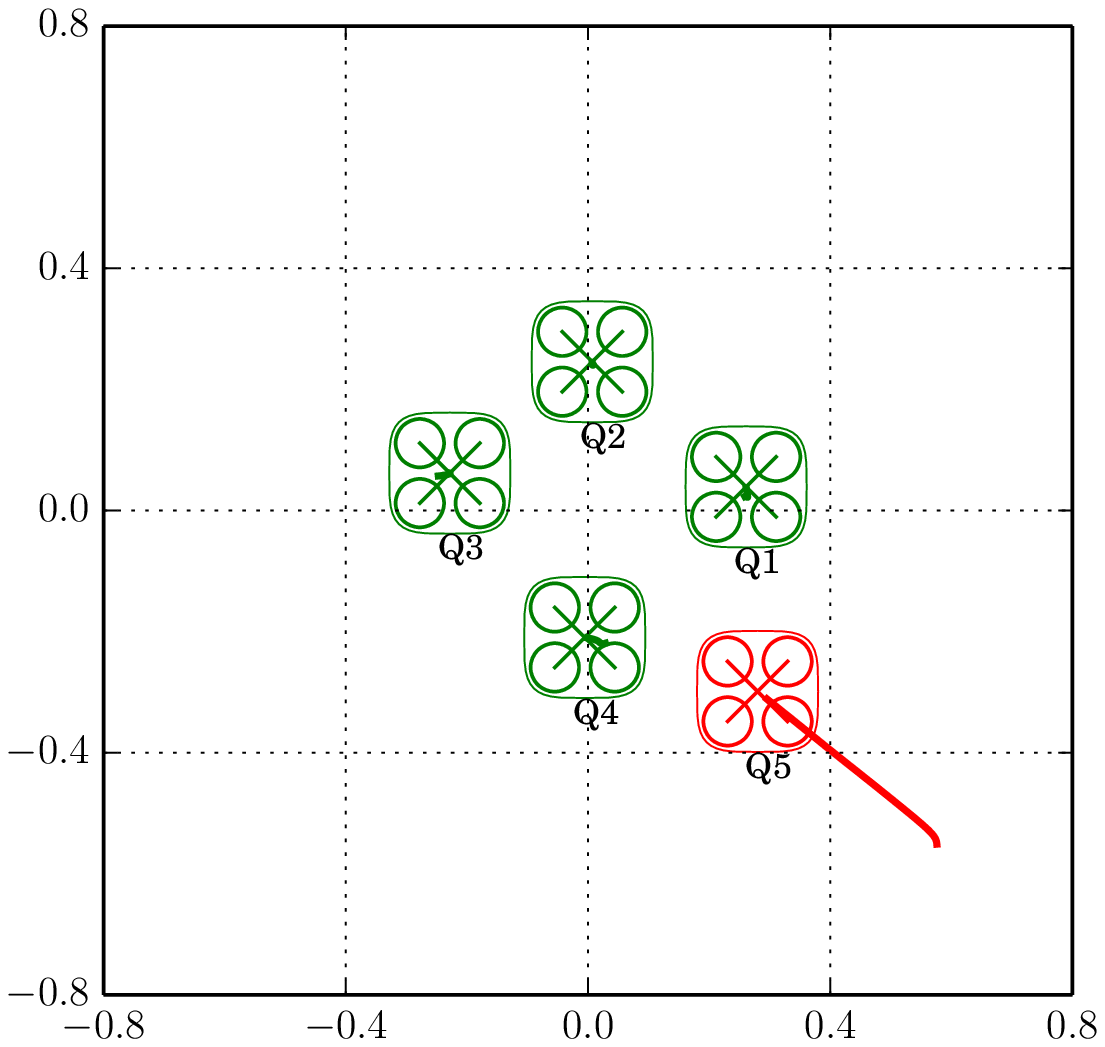}}
  \caption{Agents at 8.8s}
  \label{fig:t1s}
\end{subfigure}%
\begin{subfigure}{.24\textwidth}
  \centering
  \resizebox{1.7in}{!}{\includegraphics{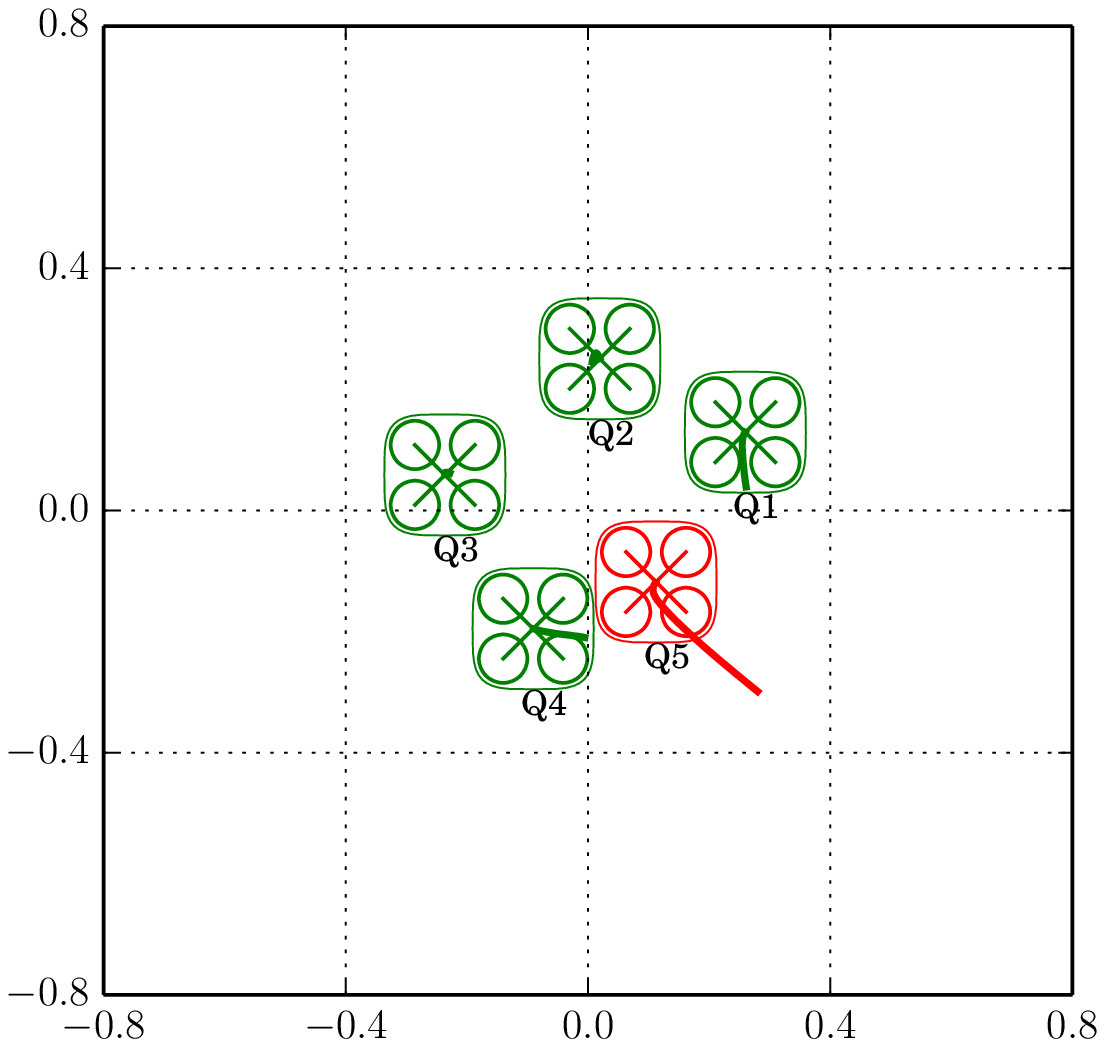}}
  \caption{Agents at 9.6s}
  \label{fig:t2s}
\end{subfigure}
\\
\begin{subfigure}{.24\textwidth}
  \centering
  \resizebox{1.7in}{!}{\includegraphics{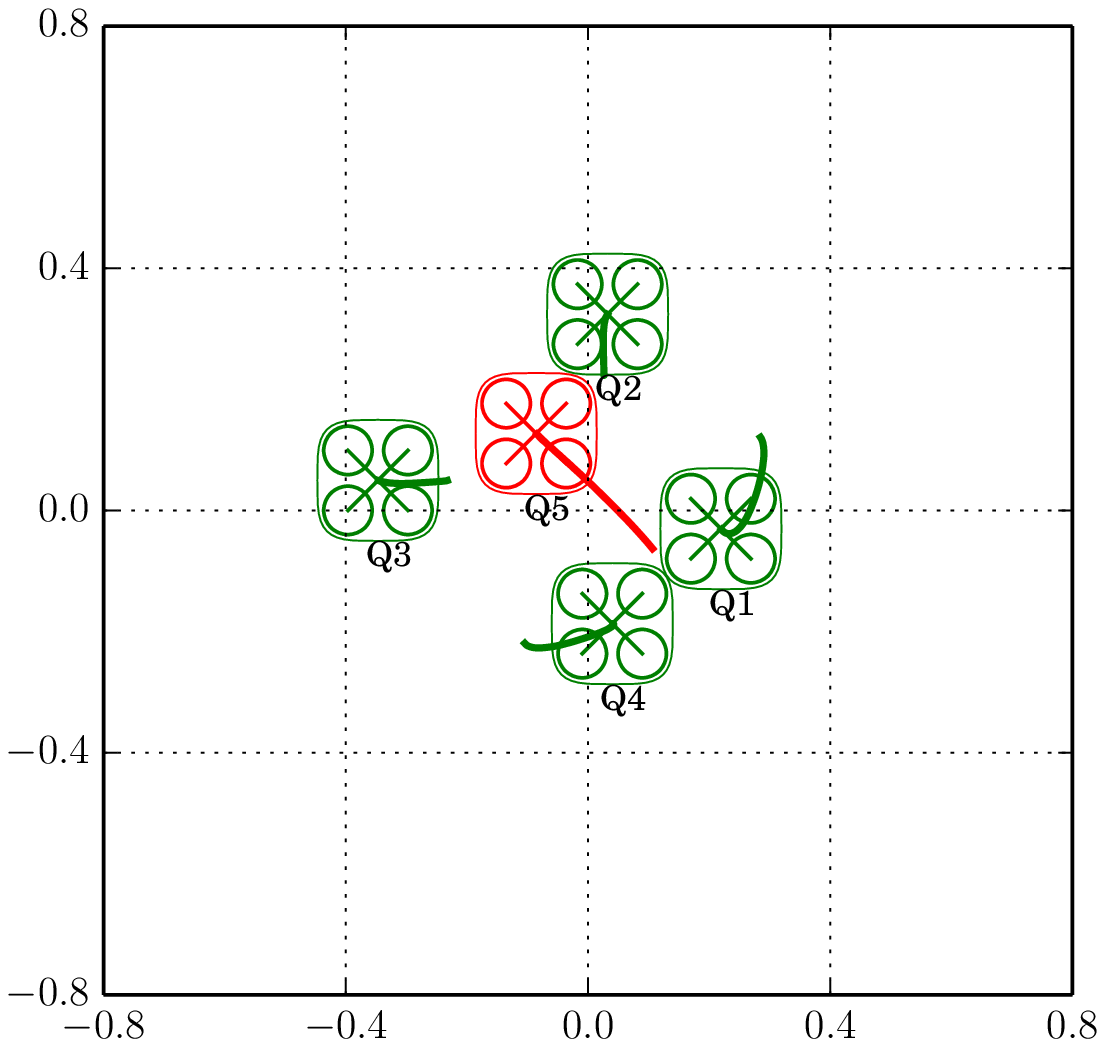}}
  \caption{Agents at 11.0s}
  \label{fig:t3s}
\end{subfigure}%
\begin{subfigure}{.24\textwidth}
  \centering
  \resizebox{1.7in}{!}{\includegraphics{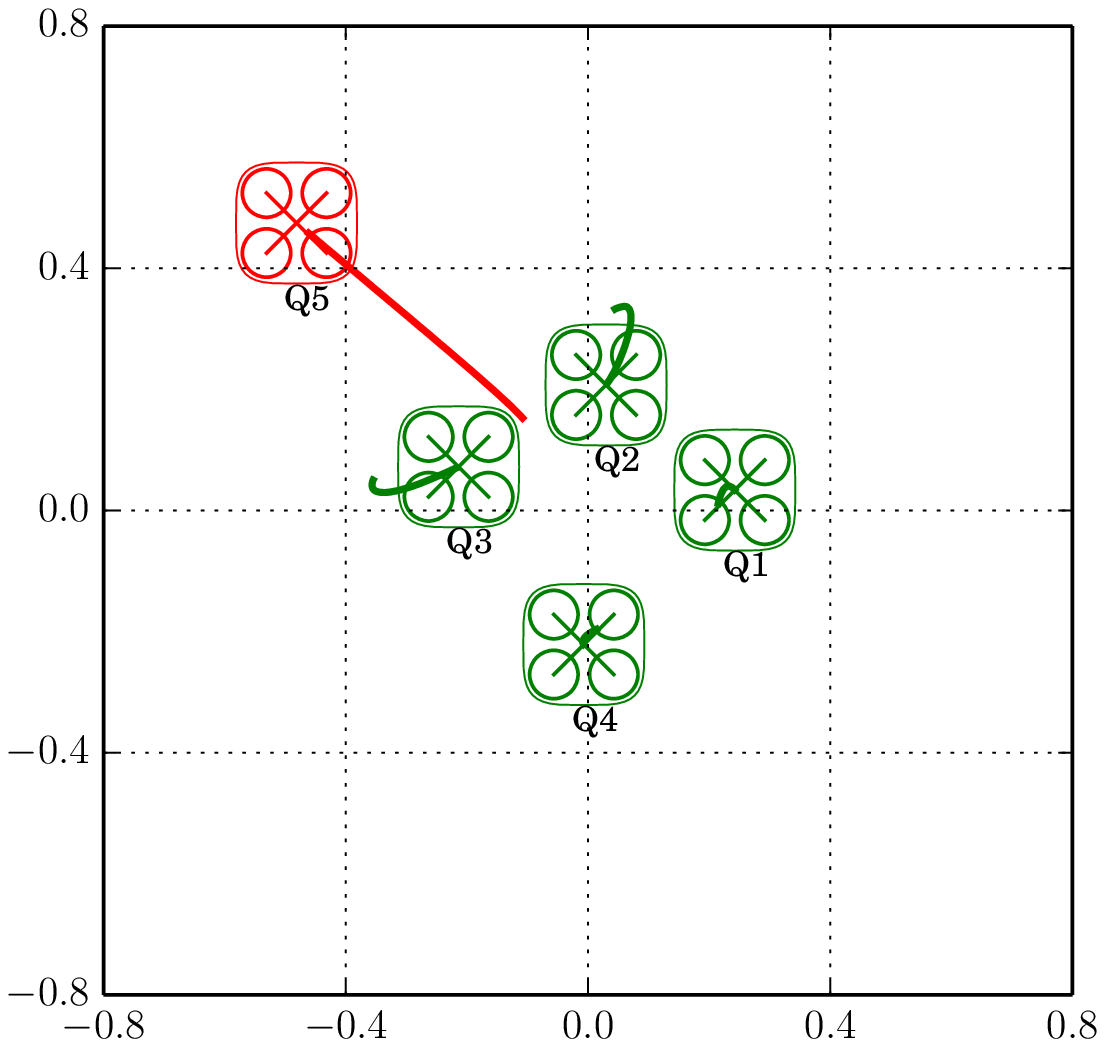}}
  \caption{Agents at 12.2s}
  \label{fig:t4s}
\end{subfigure}
\caption{Experimental data of the team of quadrotors plotted in the X-Y plane. The tail of each quadrotor illustrates its trajectory in the past 0.8s.}
\label{fig:exp1}
\end{figure}

\subsection{Showcase: Flying Through a Spinning Formation}
Similar to the previous experiment, the quadrotor $Q5$ is commanded to fly through a spinning formation as illustrated in Fig. \ref{fig:exp2spin}. 
\begin{figure}[h]
\centering
  \resizebox{3.0in}{!}{\includegraphics{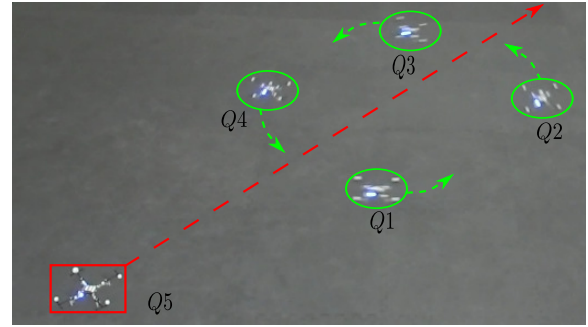}}
  \caption{Snapshot from a experiment of quad $Q5$ flying through a spinning formation consisting of four quads $Q1-Q4$. The video of this experiment is available online \cite{Quad:video}.}
  \label{fig:exp2spin}
\end{figure}

The reference trajectories of quadrotors are designed as
\begin{eqnarray}
\hat{r}_1(t) =& \begin{bmatrix} 0.45\sin(\frac{\pi}{2} t-\frac{\pi}{2}) \\ 0.45\cos(\frac{\pi}{2} t-\frac{\pi}{2}) \\ -0.8 \end{bmatrix}, 
\hat{r}_2(t) =& \begin{bmatrix} 0.45\cos(\frac{\pi}{2} t) \\ 0.45\sin(\frac{\pi}{2} t) \\ -0.8 \end{bmatrix}, \nonumber \\
\hat{r}_3(t) =& \begin{bmatrix} 0.45\cos(\frac{\pi}{2} t+\frac{\pi}{2}) \\ 0.45\sin(\frac{\pi}{2} t+\frac{\pi}{2}) \\ -0.8 \end{bmatrix}, 
\hat{r}_4(t) =& \begin{bmatrix} 0.45\cos(\frac{\pi}{2} t+\pi) \\ 0.45\sin(\frac{\pi}{2} t+\pi) \\ -0.8 \end{bmatrix}, \nonumber
\end{eqnarray}
\begin{equation*}
\hat{r}_5(t) = \text{BezierInterp}(\begin{bmatrix} -0.9 \\ -0.9 \\ -0.8 \end{bmatrix},\begin{bmatrix} 0.9 \\ 0.9 \\ -0.8 \end{bmatrix}).
\end{equation*}

As shown in Fig. \ref{fig:exp2}, $Q5$ successfully navigated through the spinning formation with minimal impact on the other four quadrotors by applying the \textit{Safety Barrier Certificates}.
\begin{figure}[H]
\centering
\begin{subfigure}{.24\textwidth}
  \centering
  \resizebox{1.7in}{!}{\includegraphics{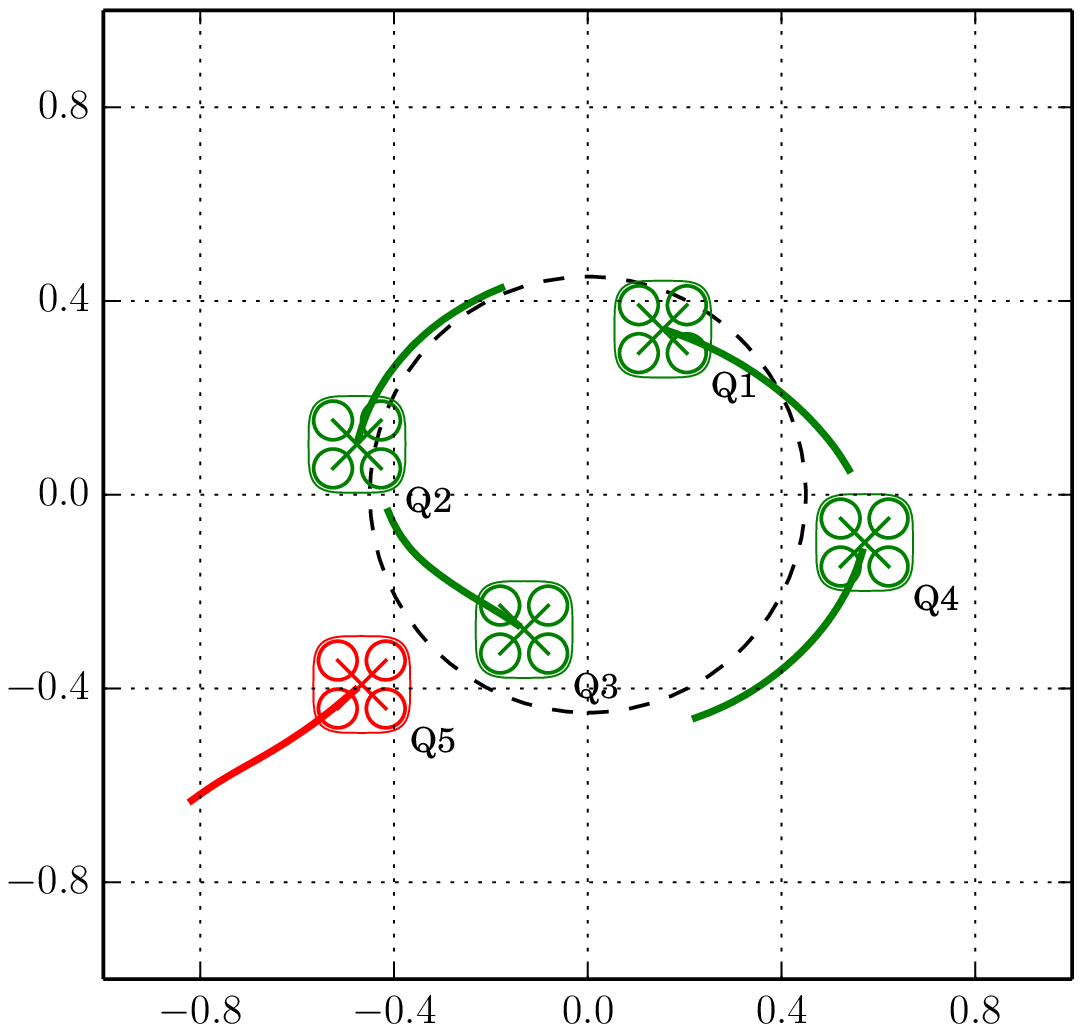}}
  \caption{Agents at 24.3s}
  \label{fig:t1}
\end{subfigure}%
\begin{subfigure}{.24\textwidth}
  \centering
  \resizebox{1.7in}{!}{\includegraphics{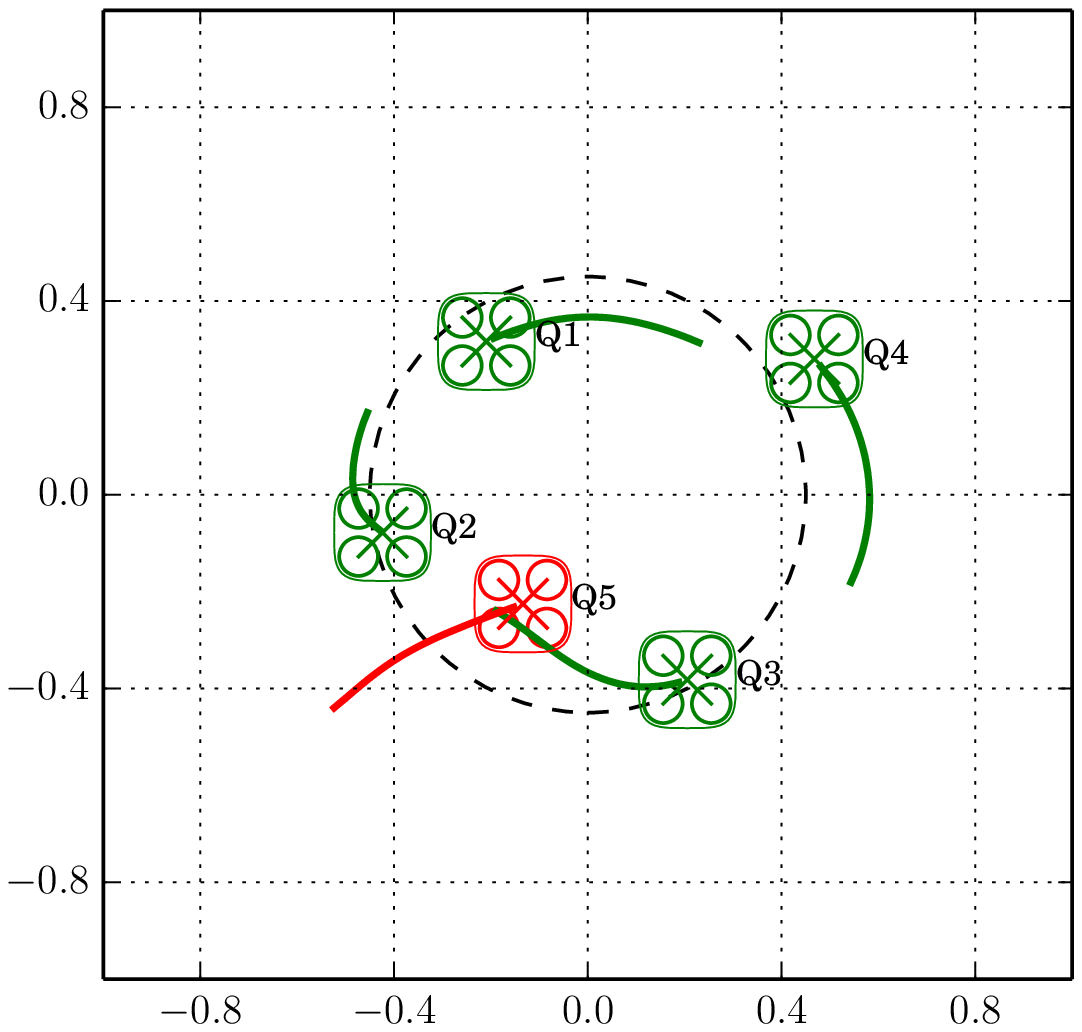}}
  \caption{Agents at 25.3s}
  \label{fig:t2}
\end{subfigure}
\\
\begin{subfigure}{.24\textwidth}
  \centering
  \resizebox{1.7in}{!}{s\includegraphics{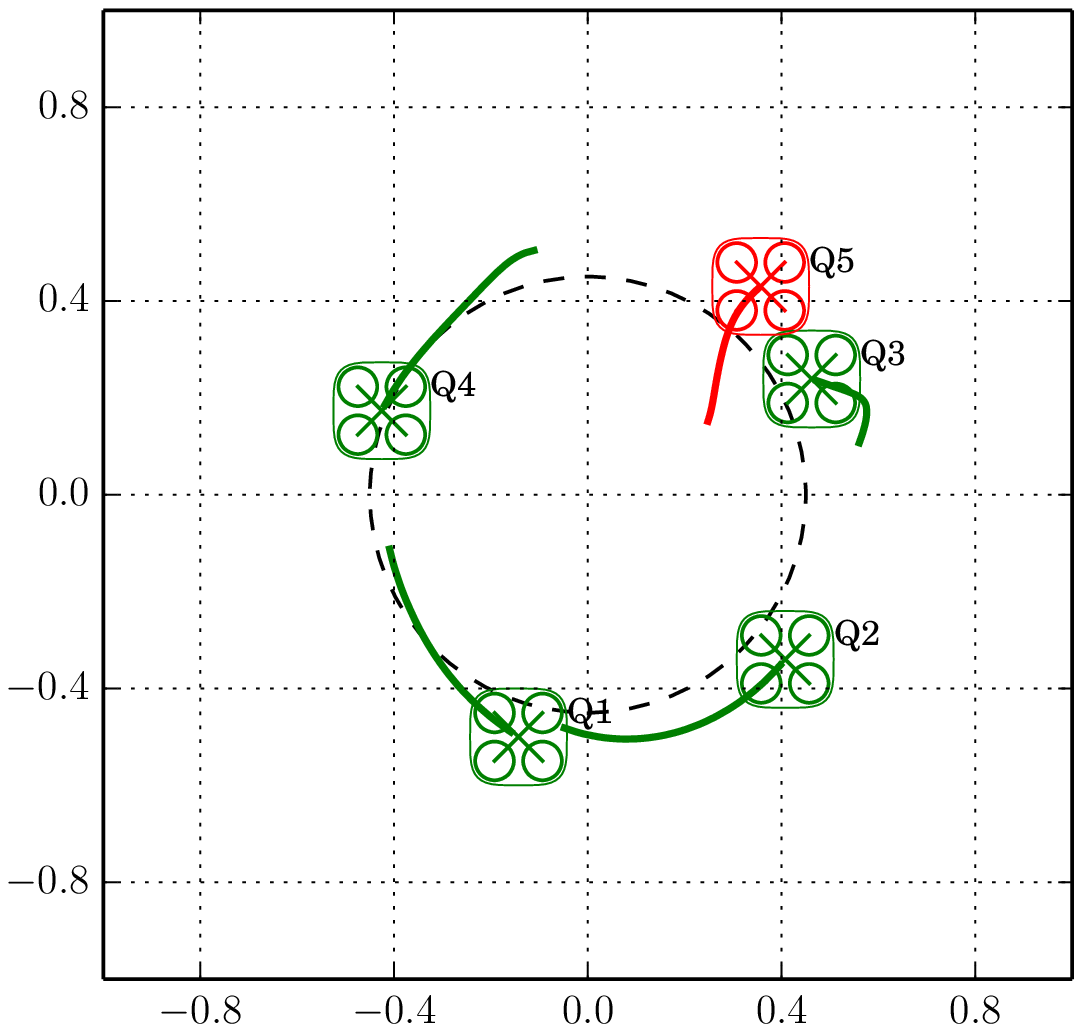}}
  \caption{Agents at 26.7s}
  \label{fig:t3}
\end{subfigure}%
\begin{subfigure}{.24\textwidth}
  \centering
  \resizebox{1.7in}{!}{\includegraphics{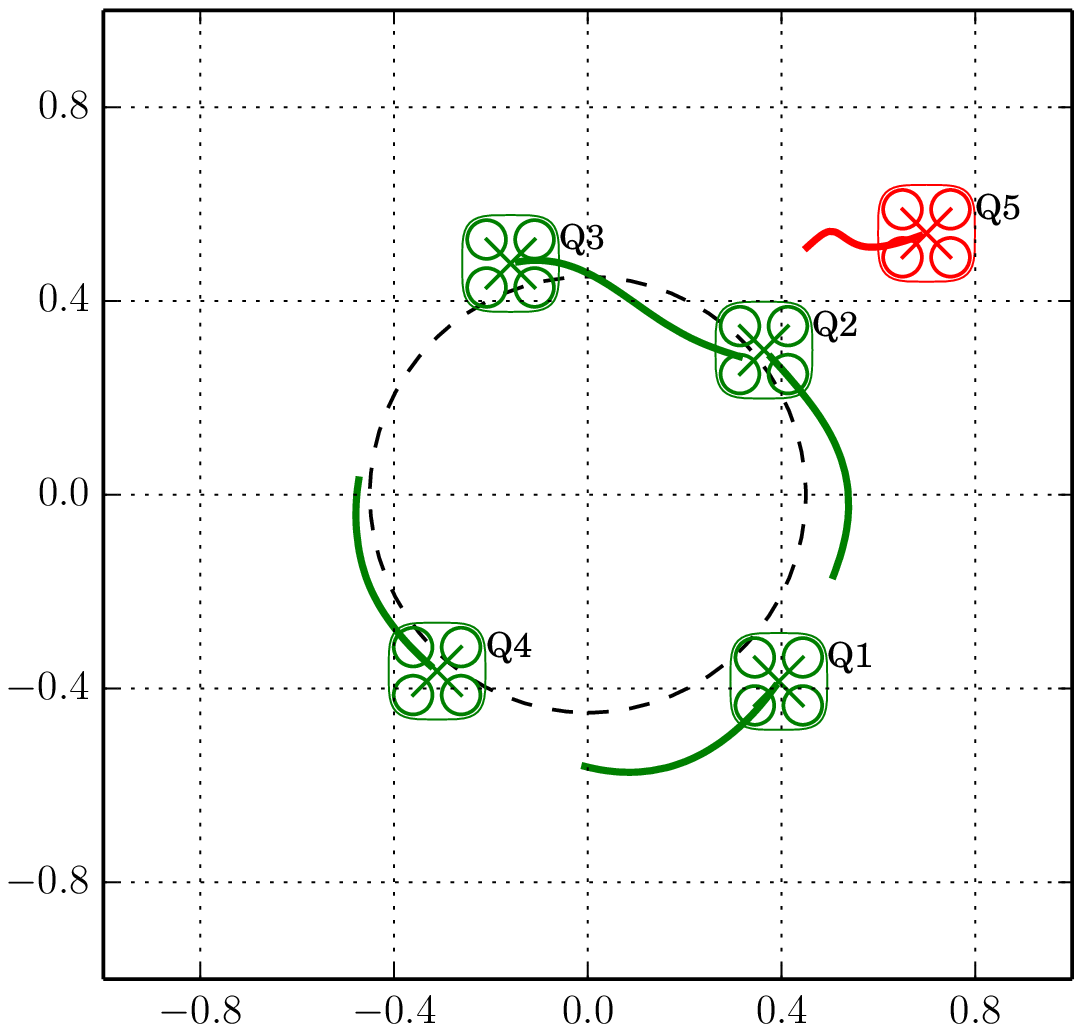}}
  \caption{Agents at 27.5s}
  \label{fig:t4}
\end{subfigure}
\caption{Experimental data of the team of quadrotors plotted in the X-Y plane. The tail of each quadrotor illustrates its trajectory in the past 0.6s.}
\label{fig:exp2}
\end{figure}

Experimental results provided in this section demonstrate that the \textit{Safety Barrier Certificates} can save flight planners the hassle of considering collision avoidance when designing the higher level multi-robot coordination algorithm. This strategy can be easily used in conjunction with other complicated motion planning strategies, e.g., optimal control algorithms, temporal/spatial assignment algorithms, to provide desired safety guarantees. The minimally invasive enforcement of the \textit{Safety Barrier Certificates} ensures that desired controller will not be rectified unless truly necessary.

\section{Conclusions}\label{sec:conclude}
A flight trajectory modification strategy is presented in this paper to ensure collision-free manuevers for teams of differential flatness based quadrotors. To do this, nominal flight trajectories, which are generated with existing control and planning algorithms, are modified in a minimally invasive way using the \textit{Safety Barrier Certificates} to avoid collisions. The proof of existence of feasible controller and virtual vehicle parameterization method to accommodate actuator limits are presented. In the end, experimental implementation of the \textit{Safety Barrier Certificates} on a team of five quadrotors validates the effectiveness of the proposed strategy. To formally prove that the virtural vehicle parameterization method results in feasible flight trajectories subject to actuator limits will be an interesting future direction.


\addtolength{\textheight}{-12cm}   


\bibliographystyle{abbrv}
\bibliography{mybib}
\end{document}